\theoremstyle{plain}
\newtheorem{theorem}{Theorem}[section]
\newtheorem{proposition}[theorem]{Proposition}
\newtheorem{lemma}[theorem]{Lemma}
\newtheorem{corollary}[theorem]{Corollary}
\theoremstyle{definition}
\newtheorem{assumption}[theorem]{Assumption}
\theoremstyle{remark}
\DeclareMathOperator*{\argmax}{arg\,max}
\DeclareMathOperator*{\argmin}{arg\,min}
\title{Generative Intrinsic Optimization: Intrinsic Control with Model Learning}
\author{%
  Jianfei Ma \\
  School of Mathematics and Statistics\\
  Northwestern Polytechnical University\\
  \texttt{matrixfeeney@gmail.com} \\
}
\begin{document}

\maketitle

\begin{abstract}
  Future sequence represents the outcome after executing the action into the environment (i.e. the trajectory onwards). When driven by the information-theoretic concept of mutual information, it seeks maximally informative consequences. Explicit outcomes may vary across state, return, or trajectory serving different purposes such as credit assignment or imitation learning. However, the inherent nature of incorporating intrinsic motivation with reward maximization is often neglected. In this work, we propose a policy iteration scheme that seamlessly incorporates the mutual information, ensuring convergence to the optimal policy. Concurrently, a variational approach is introduced, which jointly learns the necessary quantity for estimating the mutual information and the dynamics model, providing a general framework for incorporating different forms of outcomes of interest. While we mainly focus on theoretical analysis, our approach opens the possibilities of leveraging intrinsic control with model learning to enhance sample efficiency and incorporate uncertainty of the environment into decision-making.
\end{abstract}

\section{Introduction}
Deep reinforcement learning (RL) aims to improve an agent's policy with a task-specific reward, showing promise in solving complex tasks such as video games \cite{DBLP:journals/corr/MnihKSGAWR13} and robot locomotion \cite{DBLP:journals/corr/abs-1812-05905}. However, in many cases, obtaining a task-specific reward can be challenging, hindering the learning process. Intrinsic motivation, on the other hand, offers an alternative approach where the agent is driven by internal rewards to achieve goals or complete tasks. Its effectiveness has been shown in RL, including skill discovery \cite{DBLP:conf/iclr/GregorRW17}, curiosity-driven exploration \cite{DBLP:conf/nips/HouthooftCCDSTA16}, and representation learning \cite{DBLP:journals/corr/abs-1902-07685}. However, existing methods often treat intrinsic reward as an additional component to the task-specific reward, optimizing them using standard RL algorithms, without fully considering its unique nature in the agent's decision-making process. Furthermore, these methods often rely on specific variational approaches tailored to particular applications, lacking a unified perspective. In this work, we propose a novel approach that transforms the standard RL objective into a mutual information maximization framework, which employs a variational approach, enabling simultaneous approximation of the posterior and the transition model. This unified approach facilitates efficient intrinsic control combined with model learning.

In this paper, we present a comprehensive intrinsic control framework called Generative Intrinsic Optimization (GIO) that integrates a policy iteration scheme and a variational approach, enabling effective policy optimization by incorporating intrinsic motivation as a fundamental component of the agent's decision-making process. Our method is applicable to various future sequence forms, from one-step future sequences $\mathcal{F} = (s', r)$ to multi-step transitions, as well as compressed future sequences, offering potential synergies with existing approaches for further improvement. We provide a theoretical analysis of the convergence of our proposed scheme, ensuring monotonicity, and derive variational lower bounds for both one-step and multi-step scenarios. 
\begin{figure}[t]
\vskip 0.2in
\centering
\begin{minipage}{0.5\textwidth}
  \centering
  \includegraphics[width=\textwidth]{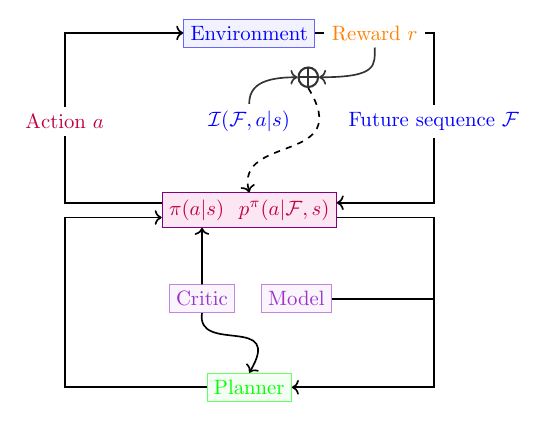}
  \subcaption{Algorithmic Architecture}
  \label{fig:4a}
\end{minipage}%
\begin{minipage}{0.5\textwidth}
  \centering
  \vspace{21.5pt}  
  \includegraphics[width=\textwidth]{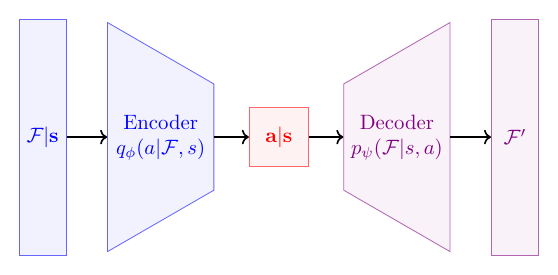}
  \vspace{21.5pt}
  \subcaption{Variational Diagram}
  \label{fig:4b}
\end{minipage}
\caption{\textbf{Left}: GIO combines both intrinsic and extrinsic rewards for policy learning and utilizes the learned model for planning purposes; \textbf{Right}: The variational model comprises an inference model and a generative model for posterior approximation and model learning.}
\label{fig:4}
\end{figure}

\section{Preliminaries}
\subsection{Notation}
Consider a regularized infinite-horizon discounted MDP, defined by a tuple $(\mathcal{S}, \mathcal{A}, P, r, \rho_{0}, \gamma, \Delta)$, where $\mathcal{S}$ is the state space, $\mathcal{A}$ is the action space, $P: \mathcal{S} \times \mathcal{A} \times \mathcal{S} \rightarrow \mathbb{R}$ is the transition probability distribution, $r: \mathcal{S} \times \mathcal{A} \rightarrow \mathbb{R}$ is the reward function, $\rho_{0}: \mathcal{S} \rightarrow \mathbb{R}$ is the distribution of the initial state $s_{0}$, $\gamma \in [0, 1)$ is the discount factor, and the additional term $\Delta$ represents other rewards such as intrinsic reward. We aim to maximize the objective function $\mathbb{E}_{\tau}\left[\sum\limits_{t=0}^{\infty}\gamma^{t}(r_{t} + \eta \Delta_{t})\right]$ with a temperature parameter $\eta$, where $\tau$ represents the trajectory generated by a stochastic policy $\pi : \mathcal{S} \times \mathcal{A} \rightarrow [0, 1]$. We denote the entropy of a distribution as $H(\cdot)$.
\paragraph{Information Seeking RL}
In the context of information-seeking RL, we introduce $\Delta = \mathcal{I}^{\pi}(\mathcal{F}, a | s)$ as the state-conditional mutual information between the current action and a future sequence $\mathcal{F}$ beyond the action execution. Our goal is to maximize the expected augmented reward by incorporating this mutual information term.
\begin{equation}
  \label{eq:mut}
  \eta (\pi) = \mathbb{E}_{\tau}\left[\sum\limits_{t=0}^{\infty}\gamma^{t}(r_{t} + \eta \mathcal{I}^{\pi}(\mathcal{F}_{t}, a_{t} | s_{t}))  \right]
\end{equation}
This formulation captures the uncertainty reduction between the current policy and the posterior, providing a flexible framework for various RL formulations. For instance, adopting an optimistic perspective, where the future sequence fully explains the executed action, the mutual information reduces to the entropy $\Delta = H(\pi)$, encouraging pure exploration \cite{DBLP:phd/us/Ziebart18}. In contrast, standard RL \cite{Sutton1998} takes a pessimistic stance, assuming the future sequence reveals no information about the executed action, that is, $\Delta = 0$.

In this paper, our focus will primarily be on the more general form of mutual information, allowing the incorporation of different choices of future sequences within a unified framework.

\section{Mutual Information}
Mutual information depicts mutual dependence between two random variables. Being an information-theoretic measure, it can be used to quantify the amount of information contained in the future $\mathcal{F}$ that explains the action $a$ given the current state $s$
\begin{equation}
  \label{eq:0}
\begin{aligned}
  \mathcal{I}^{\pi}(\mathcal{F}, a | s) = \mathbb{E}_{\pi(a | s) p(\mathcal{F} | s, a)}\left[\log \frac{p(\mathcal{F}, a | s)}{p(\mathcal{F} | s)\pi(a | s)}\right]
\end{aligned}
\end{equation}
where $\mathcal{F}$ can be any successor outcomes ahead of $(s, a)$, for instance one-step transition $\mathcal{F} = (s', r)$, or multi-steps subsequence $\mathcal{F} \subseteq (s'_{>}, r_{>})$. This quantity is compelling as it encourages the agent to seek maximally informative future outcomes and thereby reduce the uncertainty of the decisions.
\begin{equation}
  \label{eq:1}
  \mathcal{I}^{\pi}(\mathcal{F}, a | s) = H(\pi(a | s)) - \mathbb{E}_{p^{\pi}(\mathcal{F} | s)}\left[H(p^{\pi}(a | \mathcal{F}, s))\right]
\end{equation}
where $p^{\pi}(a | \mathcal{F}, s)$ is the posterior distribution corresponds to the prior $\pi$ after observing new outcomes. The mutual information quantifies the reduction in uncertainty between the prior and posterior. In what will follow, we present a policy iteration scheme that helps the agent pursue a policy that seeks maximum information about the future.
\section{Intrinsic Policy Iteration}
We start by deriving an intrinsic Bellman operator and proposing a policy iteration scheme. We then present a general convergence result for all valid future sequences $\mathcal{F}$.

It is useful to define the following operator
\begin{equation}
  \label{eq:3}
  \mathcal{T}^{\pi} Q(s_{t}, a_{t}) = r(s_{t}, a_{t}) + \gamma \mathbb{E}_{s_{t + 1}}[V(s_{t + 1})],
\end{equation}
where
\begin{equation}
  \label{eq:4}
  V(s_{t}) = \mathbb{E}_{a_{t} \sim \pi, \mathcal{F}_{t}}[Q(s_{t}, a_{t}) + \eta (\log{p^{\pi}(a_{t} | s_{t}, \mathcal{F}_{t})} - \log{\pi(a_{t} | s_{t})})]
\end{equation}
where $\eta$ is a hyperparameter that controls the relative strength of the augmentation against the reward.

It is not difficult to see that $\mathcal{T}^{\pi}$ is a contraction by modifying the reward as $r(s, a) + \gamma \mathbb{E}_{s'}\left[\mathcal{I}(a', \mathcal{F}' | s')\right]$. It indicates that if we repeatedly apply the intrinsic Bellman operator, we will get the intrinsic action-value function $Q^{\pi}$.
\begin{proposition}
  \label{prop:1}
  If $\mathcal{I}(\mathcal{F}, a | s)$ is bounded for any $s \in \mathcal{S}$, then $\lim_{k \rightarrow \infty} (\mathcal{T}^{\pi})^{k} Q = Q^{\pi}$ for any initial function $Q$, and specifically $Q^{\pi}$ is the unique solution of \eqref{eq:3}.
\end{proposition}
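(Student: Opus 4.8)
The plan is to reduce the augmented operator $\mathcal{T}^{\pi}$ to an ordinary policy-evaluation Bellman operator acting on a \emph{modified} reward, and then invoke the Banach fixed-point theorem exactly as in the unregularized case. The excerpt already signals this route by noting that $\mathcal{T}^{\pi}$ becomes a contraction once the reward is replaced by $r(s,a) + \gamma\eta\,\mathbb{E}_{s'}[\mathcal{I}(a',\mathcal{F}' \mid s')]$, so the task is to justify this reduction carefully and then run the standard fixed-point machinery.

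First I would rewrite the value term in \eqref{eq:4}. Taking the expectation over $a_{t} \sim \pi$ and $\mathcal{F}_{t}$ of the augmentation $\eta(\log p^{\pi}(a_{t} \mid s_{t}, \mathcal{F}_{t}) - \log \pi(a_{t} \mid s_{t}))$ and using the chain rule $p^{\pi}(\mathcal{F},a\mid s) = p^{\pi}(a\mid \mathcal{F},s)\,p^{\pi}(\mathcal{F}\mid s)$ inside the ratio of \eqref{eq:0}, this expectation collapses to exactly $\eta\,\mathcal{I}^{\pi}(\mathcal{F}_{t}, a_{t} \mid s_{t})$. Hence $V(s_{t}) = \mathbb{E}_{a_{t}\sim\pi}[Q(s_{t},a_{t})] + \eta\,\mathcal{I}^{\pi}(\mathcal{F}_{t}, a_{t}\mid s_{t})$, and substituting into \eqref{eq:3} lets me group the discounted information term with the reward. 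Defining $\tilde{r}(s,a) := r(s,a) + \gamma\eta\,\mathbb{E}_{s'}[\mathcal{I}^{\pi}(\mathcal{F}', a'\mid s')]$, the operator becomes $\mathcal{T}^{\pi} Q(s,a) = \tilde{r}(s,a) + \gamma\,\mathbb{E}_{s',a'\sim\pi}[Q(s',a')]$, which is the familiar linear Bellman operator.

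With this in hand the contraction estimate is routine: for any bounded $Q_{1}, Q_{2}$ the reward and the outer expectation over $(s',a')$ cancel in the difference, giving $\|\mathcal{T}^{\pi} Q_{1} - \mathcal{T}^{\pi} Q_{2}\|_{\infty} \le \gamma\|Q_{1} - Q_{2}\|_{\infty}$, and since $\gamma \in [0,1)$ this is a $\gamma$-contraction in the sup norm. The Banach fixed-point theorem then yields a unique fixed point to which $(\mathcal{T}^{\pi})^{k} Q$ converges from any initial $Q$; identifying this fixed point with $Q^{\pi}$ is immediate because $Q^{\pi}$ is defined as the solution of \eqref{eq:3}.

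The main obstacle, and the only place the hypothesis is genuinely used, is ensuring that $\mathcal{T}^{\pi}$ actually maps the Banach space of bounded functions into itself, so that the fixed-point theorem applies and $Q^{\pi}$ is finite. This is precisely where boundedness of $\mathcal{I}(\mathcal{F}, a\mid s)$ matters: together with a bounded $r$ it guarantees $\|\tilde{r}\|_{\infty} < \infty$, keeping the iterates inside the complete space. I would therefore state explicitly that $r$ is bounded (standard for discounted MDPs) and verify that $\mathbb{E}_{s'}[\mathcal{I}^{\pi}(\mathcal{F}',a'\mid s')]$ inherits the pointwise bound, after which completeness and the contraction deliver both the limit and the uniqueness claim.
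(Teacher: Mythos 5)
Your proposal is correct and follows essentially the same route as the paper's own proof: fold the information-theoretic augmentation into a modified reward $\tilde{r}$, observe that the resulting operator is the standard linear policy-evaluation operator, establish the $\gamma$-contraction in the sup norm, and invoke the Banach fixed-point theorem. Your explicit remark that the boundedness of $\mathcal{I}(\mathcal{F}, a \mid s)$ is needed precisely to keep $\tilde{r}$ bounded (so that $\mathcal{T}^{\pi}$ maps the complete space of bounded functions into itself) is a small but welcome clarification that the paper's proof leaves implicit.
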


Although the intricate relationship between the posterior $p^{\pi}$ and the policy $\pi$ makes a direct improvement over $Q^{\pi}$ infeasible, it is possible to follow an alternating optimization procedure when both $Q^{\pi}$ and $p^{\pi}$ are fixed. In such cases, under certain conditions, this approach can still ensure optimality.

We can solve for the one-step optimal policy when the intrinsic action-value function is attained as follows
\begin{equation}
  \label{eq:5}
  \mathcal{G}(Q^{\pi}, p^{\pi}) = \frac{\exp{\frac{1}{\eta}\left(Q^{\pi} + \eta \mathbb{E}_{\mathcal{F}}\left[\log{p^{\pi}(a | s, \mathcal{F})}\right] \right)}}{Z^{\pi}(s)}
\end{equation}
where $Z^{\pi}(s)$ is a partition function dependent only on state $s$. 
% \sum\limits_{a \in \mathcal{A}}\exp{\frac{1}{\eta}\left(Q^{\pi} + \eta \log{p^{\pi}(a | s, \mathcal{F})} \right) }

Repeated application of the intrinsic Bellman operator and the softmax operator, we can produce a sequence of $Q^{\pi_{k}}, k = 0, 1, \cdots$ by starting from arbitrary policy $\pi_{0}$. Unsurprisingly, under some mild condition, for any future sequence $\mathcal{F}$ of interest, it is guaranteed to converge to the optimal policy $\pi^{\star} \triangleq \argmax_{\pi} V^{\pi}$ (where $V^{\pi}$ can be obtained by inserting $Q^{\pi}$ into Equation \eqref{eq:4}) and the optimal action-value function $Q^{\pi^{\star}}$.

\begin{assumption}
  \label{assum:1}
  The entropy $H(\pi^{\star})$ is bounded.
\end{assumption}
\begin{assumption}
  \label{assum:2}
  The initial policy $\pi_{0}$ is non-zero everywhere.
\end{assumption}
\begin{assumption}
  \label{assum:3}
  The limit of
\begin{equation}
  \sum\limits_{k=0}^{n}\mathbb{E}_{s', (a', \mathcal{F}') \sim (p^{\pi^{\star}}(\mathcal{F}', a' | s') - p^{\pi_{k + 1}}(\mathcal{F}', a' | s'))}\left[\eta (\log{p^{\pi^{\star}}(a' | \mathcal{F}', s')} - \log{\pi^{\star}(a' | s')}) + Q^{\pi_{k}}\right]
\end{equation}
  % exists or if not, is positive infinity\footnote{We denote $\mathbb{E}_{p - q} = \mathbb{E}_{p} - \mathbb{E}_{q}$ for less verbatim repetition.} for any $(s, a) \in \mathcal{S} \times \mathcal{A}$.
  exists\footnote{We denote $\mathbb{E}_{p - q} = \mathbb{E}_{p} - \mathbb{E}_{q}$ for less verbatim repetition.} for any $(s, a) \in \mathcal{S} \times \mathcal{A}$.
\end{assumption}
\begin{theorem}
  \label{thm:converge}
  Under assumptions \ref{assum:1}--\ref{assum:3}, for any future sequence $\mathcal{F}$, it holds that
  \begin{equation}
    \label{eq:6}
    \lim_{k \rightarrow \infty} Q^{\pi_{k}} = Q^{\pi^{\star}}
  \end{equation}
\end{theorem}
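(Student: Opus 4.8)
The plan is to follow the classical soft policy iteration template---policy evaluation, policy improvement, then iteration to optimality---while carefully handling the feature that, unlike in entropy-regularized RL, the posterior $p^{\pi}$ here is coupled to $\pi$. Policy evaluation is already supplied by Proposition \ref{prop:1}, which yields $(\mathcal{T}^{\pi})^{k}Q\to Q^{\pi}$; I also note the monotonicity $\mathcal{T}^{\pi}Q_{1}\ge\mathcal{T}^{\pi}Q_{2}$ whenever $Q_{1}\ge Q_{2}$, since the augmentation term $\eta(\log p^{\pi}-\log\pi)$ is independent of $Q$ and cancels. So what remains is monotone improvement of the iterates and convergence of their limit to $\pi^{\star}$.

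For improvement, I would first identify $\mathcal{G}(Q^{\pi_{k}},p^{\pi_{k}})$ in \eqref{eq:5} as the exact maximizer over $\pi$ of the frozen one-step surrogate $\widetilde{J}_{k}(\pi)(s)=\mathbb{E}_{a\sim\pi,\mathcal{F}}[Q^{\pi_{k}}(s,a)+\eta\log p^{\pi_{k}}(a\mid s,\mathcal{F})-\eta\log\pi(a\mid s)]$, via the standard Lagrangian/softmax computation; this gives the exact identity $\widetilde{J}_{k}(\pi_{k+1})-\widetilde{J}_{k}(\pi)=\eta D_{\mathrm{KL}}(\pi\,\|\,\pi_{k+1})$ for every $\pi$. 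Taking $\pi=\pi_{k}$ and using $\widetilde{J}_{k}(\pi_{k})=V^{\pi_{k}}$ (eq.\ \eqref{eq:4}) yields $\widetilde{J}_{k}(\pi_{k+1})\ge V^{\pi_{k}}$. The subtlety is that $\widetilde{J}_{k}$ freezes the posterior at $p^{\pi_{k}}$ whereas $\mathcal{T}^{\pi_{k+1}}$ uses the true $p^{\pi_{k+1}}$; replacing one by the other changes the next-state value by $\eta\,\mathbb{E}_{p^{\pi_{k+1}}(\mathcal{F}\mid s)}[D_{\mathrm{KL}}(p^{\pi_{k+1}}(a\mid s,\mathcal{F})\,\|\,p^{\pi_{k}}(a\mid s,\mathcal{F}))]\ge 0$. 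Hence the inner next-state value defining $\mathcal{T}^{\pi_{k+1}}Q^{\pi_{k}}$ is $\ge V^{\pi_{k}}$, so $\mathcal{T}^{\pi_{k+1}}Q^{\pi_{k}}\ge\mathcal{T}^{\pi_{k}}Q^{\pi_{k}}=Q^{\pi_{k}}$; iterating the monotone operator $\mathcal{T}^{\pi_{k+1}}$ together with Proposition \ref{prop:1} gives $Q^{\pi_{k+1}}\ge Q^{\pi_{k}}$, and therefore $V^{\pi_{k+1}}\ge V^{\pi_{k}}$. Boundedness of $\mathcal{I}$ (and of the reward) then makes $\{V^{\pi_{k}}\}$ a bounded monotone sequence, so it converges pointwise.

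To show the limit is optimal, I would track the gap $\delta_{k}=V^{\pi^{\star}}-V^{\pi_{k}}\ge 0$, which is nonincreasing by the previous step, so $\delta_{k}\downarrow\delta_{\infty}\ge 0$. Applying the softmax identity with $\pi=\pi^{\star}$ and unrolling one Bellman step, $\delta_{k+1}$ can be written as $\gamma$ times a discounted expectation of $\delta_{k}$ plus residual terms; the residuals that obstruct a clean $\gamma$-contraction are posterior-mismatch terms of the form $\mathbb{E}_{(a',\mathcal{F}')\sim p^{\pi^{\star}}-p^{\pi_{k+1}}}[\,\eta(\log p^{\pi^{\star}}(a'\mid\mathcal{F}',s')-\log\pi^{\star}(a'\mid s'))+Q^{\pi_{k}}(s',a')\,]$---precisely the summand of Assumption \ref{assum:3}. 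Assumption \ref{assum:2} keeps every log-ratio finite along the trajectory of iterates and Assumption \ref{assum:1} controls $H(\pi^{\star})$, so all these quantities are well defined; summing over $k$, the assumed convergence of the series forces its terms to vanish. Passing to the limit then leaves $\delta_{\infty}\le\gamma\,\mathbb{E}[\delta_{\infty}]$ with $\gamma<1$, which forces $\delta_{\infty}=0$; thus $V^{\pi_{k}}\to V^{\pi^{\star}}$, and through $Q^{\pi}(s,a)=r(s,a)+\gamma\mathbb{E}_{s'}[V^{\pi}(s')]$ the claimed $Q^{\pi_{k}}\to Q^{\pi^{\star}}$.

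The main obstacle is exactly this coupling between $p^{\pi}$ and $\pi$: the softmax update is only greedy for a surrogate in which the posterior is frozen, so improvement must be transported to the true posterior (handled by nonnegativity of the posterior KL), and---more seriously---optimality cannot be concluded from a contraction or fixed-point argument alone, because the residual mismatch between $p^{\pi^{\star}}$ and $p^{\pi_{k+1}}$ need not vanish a priori. Assumption \ref{assum:3} is the device that tames this residual series, and the delicate part of the write-up will be the bookkeeping that identifies the telescoped remainder with exactly that series and justifies interchanging the limit with the discounted expectation.
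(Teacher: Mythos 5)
Your policy-improvement half is correct and is essentially the paper's Theorem \ref{thm:monotonic} recast: the softmax identity $\widetilde{J}_{k}(\pi_{k+1})-\widetilde{J}_{k}(\pi)=\eta D_{\text{KL}}(\pi\,\|\,\pi_{k+1})$, the transport from the frozen posterior $p^{\pi_{k}}$ to the true $p^{\pi_{k+1}}$ via a nonnegative posterior KL (Corollary \ref{cor:post}), and iteration of the monotone operator all check out, giving $Q^{\pi_{k+1}}\ge Q^{\pi_{k}}$ and convergence of the bounded monotone sequence $V^{\pi_{k}}$. The genuine gap is in the optimality half. You assert that unrolling one Bellman step writes $\delta_{k+1}=V^{\pi^{\star}}-V^{\pi_{k+1}}$ as $\gamma$ times an expectation of $\delta_{k}$ plus residuals that are ``precisely the summand of Assumption \ref{assum:3}.'' Carrying out that unrolling (softmax identity at $\pi=\pi^{\star}$, then $\eta\log Z^{\pi_{k}}\le V^{\pi_{k+1}}$) actually produces a residual of the form $\eta\,\mathbb{E}_{p^{\pi^{\star}}(\mathcal{F}\mid s)}\bigl[D_{\text{KL}}\bigl(p^{\pi^{\star}}(\cdot\mid s,\mathcal{F})\,\|\,p^{\pi_{k}}(\cdot\mid s,\mathcal{F})\bigr)\bigr]$ --- an expected KL between posteriors under a single measure --- whereas the Assumption \ref{assum:3} summand is a \emph{difference of expectations} of the fixed integrand $\eta(\log p^{\pi^{\star}}-\log\pi^{\star})+Q^{\pi_{k}}$ under the two joint measures $p^{\pi^{\star}}$ and $p^{\pi_{k+1}}$. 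These are not the same quantity, and Assumption \ref{assum:3} does not visibly control the former; that identification is the missing step, and without it your recursion carries a nonnegative residual with no mechanism forcing it to zero, so the concluding $\delta_{\infty}\le\gamma\,\mathbb{E}[\delta_{\infty}]$ step does not follow.

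Relatedly, you misassign Assumptions \ref{assum:1} and \ref{assum:2}: in the paper they are not mere regularity conditions but supply the vanishing \emph{upper} bound. The paper's route is a Ces\`aro sandwich of $D_{k}=\mathcal{T}^{\star}Q^{\pi_{k}}-\mathcal{T}^{\pi_{k+1},p^{\pi_{k}}}Q^{\pi_{k}}$: Lemma \ref{lm:kl} and the identity \eqref{eq:29} bound the average of $D_{k}$ above by the telescoped quantity $\tfrac{\eta\gamma}{n}\mathbb{E}_{\pi^{\star}}[\log(\pi^{\star}/\pi_{0})]\to 0$ (this is exactly where Assumptions \ref{assum:1}--\ref{assum:2} are consumed), while Corollary \ref{cor:post} and a change of measure from $\pi_{k+1}$ to $\pi^{\star}$ (Equations \eqref{eq:33}--\eqref{eq:34}) bound it below by the Ces\`aro average of the Assumption \ref{assum:3} summands; optimality is then concluded through uniqueness of the fixed point of $\mathcal{T}^{\star}$, not through a contraction on the value gap. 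Your gap-recursion structure is arguably cleaner and the monotonicity preamble is a nice addition the paper keeps separate, but to close it you must either re-derive the residual so that it is literally the Assumption \ref{assum:3} quantity --- which requires the same change-of-measure bookkeeping as the paper --- or import the telescoping upper-bound argument; as written the proof does not close.
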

However, without knowing the transition model and the posterior, it may be difficult to utilize this general convergence result. In the next section, we unify the model learning and posterior approximation into a single model, considering one-step transitions.
\section{Inference with Model Learning}

\subsection{Variational Inference}
Due to the intractability of the marginal distribution, obtaining the posterior can be challenging. Therefore, we employ variational inference \cite{DBLP:journals/corr/KingmaW13} using an inference model $q_{\phi}(a | s, s', r)$ to approximate the true posterior. The dynamic model is parameterized as $p_{\psi}(s', r | s, a)$. For a given policy $\pi$, we can derive a variational lower bound on the conditional marginal distribution of $\mathcal{F} = (s', r)$
\begin{equation}
  \label{eq:7}
\begin{aligned}
    \log p^{\pi}(s', r | s) & \geq \mathcal{L}(\phi, \psi; s, s', r)\\
  & = -D_{\text{KL}}(q_{\phi}(a | s, s', r) || \pi(a | s)) + \mathbb{E}_{q_{\phi}(a | s, s', r)}[\log p_{\psi}(s', r | s, a)]
\end{aligned}
\end{equation}
where the action space is naturally treated as a latent inference target, for which the policy contains the necessary prior knowledge. The recognition model encodes the sequential experiences to infer the true posterior and the generative model constructs environment dynamics. This allows efficient posterior approximation for any future sequence $\mathcal{F}$, capturing complex dynamics.

In practice, we make a common assumption of factorization for the transition model $p_{\psi}$
\begin{equation}
  \label{eq:8}
  p_{\psi}(s', r | s, a) = p_{\psi}(s' | s, a) p_{\psi}(r | s, a)
\end{equation}

\subsection{Policy Improvement}
After observing new outcomes emitted from the environment, the agent will update its belief over the current policy based on both extrinsic and intrinsic rewards. We project the policy onto the one-step optimal policy $\mathcal{G}(Q^{\pi}, p^{\pi})$ as shown in Equation \eqref{eq:5} for each state $s \in \mathcal{S}$
\begin{equation}
  \label{eq:9}
  \argmin_{\pi' \in \Pi}D_{\text{KL}}\left(\pi'(\cdot | s) \Bigl| \Bigr| \mathcal{G}(Q^{\pi}, p^{\pi}) \right)
\end{equation}
Once we arrive at our new policy, by reevaluating the corresponding posterior, we can guarantee a monotonic improvement.
\begin{theorem}
  \label{thm:monotonic}
  If $\tilde{\pi}$ minimizes the projection loss against $\pi$ for any $s \in \mathcal{S}$ and the corresponding posterior is attained. Then $Q^{\tilde{\pi}}(s_{t}, a_{t}) \geq Q^{\pi}(s_{t}, a_{t})$ for all $(s_{t}, a_{t}) \in \mathcal{S} \times \mathcal{A}$.
\end{theorem}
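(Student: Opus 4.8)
The plan is to follow the classical soft policy-improvement template, but with the extra subtlety that the projection target $\mathcal{G}(Q^\pi, p^\pi)$ in \eqref{eq:9} is built from the \emph{old} posterior $p^\pi$, whereas the quantity $Q^{\tilde\pi}$ we must bound is governed by the \emph{re-evaluated} posterior $p^{\tilde\pi}$. Reconciling these two posteriors is the heart of the argument.

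First I would exploit the optimality of $\tilde\pi$ in \eqref{eq:9}. Since $\pi$ itself lies in $\Pi$, feasibility gives $D_{\text{KL}}(\tilde\pi(\cdot|s)\|\mathcal{G}(Q^\pi,p^\pi)) \leq D_{\text{KL}}(\pi(\cdot|s)\|\mathcal{G}(Q^\pi,p^\pi))$ for every $s$. Using $\log\mathcal{G}(Q^\pi,p^\pi)(a|s) = \tfrac{1}{\eta}Q^\pi(s,a) + \mathbb{E}_{\mathcal{F}}[\log p^\pi(a|s,\mathcal{F})] - \log Z^\pi(s)$ read off from \eqref{eq:5}, I expand both divergences; the state-only term $\log Z^\pi(s)$ cancels between the two sides. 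Multiplying through by $\eta>0$ and rearranging leaves
\[\mathbb{E}_{a\sim\tilde\pi,\mathcal{F}}[Q^\pi(s,a) + \eta\log p^\pi(a|s,\mathcal{F}) - \eta\log\tilde\pi(a|s)] \;\geq\; V^\pi(s),\]
because the right-hand side is precisely $V^\pi(s)$ as defined in \eqref{eq:4}.

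The main obstacle is that the left-hand side still carries the stale posterior $p^\pi$ rather than $p^{\tilde\pi}$. To close this gap I would invoke the variational characterization of the Bayesian posterior as the maximizer of expected log-likelihood under its own policy. Factoring the joint law as $\tilde\pi(a|s)\,p(\mathcal{F}|s,a) = p^{\tilde\pi}(\mathcal{F}|s)\,p^{\tilde\pi}(a|\mathcal{F},s)$ yields
\[\mathbb{E}_{a\sim\tilde\pi,\mathcal{F}}[\log p^{\tilde\pi}(a|s,\mathcal{F}) - \log p^\pi(a|s,\mathcal{F})] = \mathbb{E}_{\mathcal{F}\sim p^{\tilde\pi}}[D_{\text{KL}}(p^{\tilde\pi}(\cdot|\mathcal{F},s)\|p^\pi(\cdot|\mathcal{F},s))] \geq 0,\]
so replacing $p^\pi$ by the re-evaluated $p^{\tilde\pi}$ can only increase the left-hand side. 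Hence $V^\pi(s) \leq \mathbb{E}_{a\sim\tilde\pi,\mathcal{F}}[Q^\pi(s,a) + \eta\log p^{\tilde\pi}(a|s,\mathcal{F}) - \eta\log\tilde\pi(a|s)]$, and this right-hand side is exactly the value of $Q^\pi$ pushed through \eqref{eq:4} with the new policy and its attained posterior, i.e. $(\mathcal{T}^{\tilde\pi}Q^\pi)(s,a)$ via \eqref{eq:3}.

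Finally I would lift this one-step bound to the whole value function. The estimate reads $Q^\pi \leq \mathcal{T}^{\tilde\pi}Q^\pi$ pointwise. Since $\mathcal{T}^{\tilde\pi}$ is monotone (the augmentation terms $\log p^{\tilde\pi}$ and $\log\tilde\pi$ do not depend on the $Q$ being acted on, so $Q_1 \leq Q_2$ forces $\mathcal{T}^{\tilde\pi}Q_1 \leq \mathcal{T}^{\tilde\pi}Q_2$) and, by Proposition \ref{prop:1}, a contraction whose unique fixed point is $Q^{\tilde\pi}$, iterating gives the monotone chain $Q^\pi \leq \mathcal{T}^{\tilde\pi}Q^\pi \leq (\mathcal{T}^{\tilde\pi})^2 Q^\pi \leq \cdots \to Q^{\tilde\pi}$, which delivers $Q^{\tilde\pi}(s_t,a_t) \geq Q^\pi(s_t,a_t)$ for all $(s_t,a_t)$. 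I expect the posterior-bridging inequality to be the crux; the projection expansion and the Bellman telescoping are routine once the posterior monotonicity is secured.
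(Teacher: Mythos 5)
Your proof is correct and follows essentially the same route as the paper: the projection optimality yields the one-step bound $V^{\pi}(s) \leq \mathbb{E}_{a\sim\tilde{\pi},\mathcal{F}}\left[Q^{\pi} + \eta(\log p^{\pi}(a|s,\mathcal{F}) - \log\tilde{\pi}(a|s))\right]$, your posterior-bridging inequality is exactly the content of the paper's Corollary \ref{cor:post} (non-negativity of the KL divergence between the re-evaluated and stale posteriors), and the Bellman telescoping finishes. The only cosmetic difference is ordering: the paper unrolls the Bellman recursion first with the stale posterior $p^{\pi}$ (introducing the intermediate object $Q^{\tilde{\pi},p^{\pi}}$) and applies the posterior-monotonicity step at the very end, whereas you apply it at the one-step level and then iterate the monotone contraction $\mathcal{T}^{\tilde{\pi}}$ to its fixed point --- both orderings are valid.
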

\subsection{Trajectory-Wise Lower Bound}
Alternatively, we can consider a lower bound upon a trajectory $\tau$ with a finite horizon $T$, enabling the capture of longer horizon information with a more accurate prediction of the transition and wider coverage of the trajectory surprise. Denote $\mathbf{x}_{1:T} = {(s_{t + 1}, r_{t} | s_{t})}_{t = 1}^{T}$ and $\mathbf{z}_{1:T} = {(a_{t} | s_{t})}_{t = 1}^{T}$, then we have
\begin{equation}
  \label{eq:10}
  \log p^{\pi} \left(\mathbf{x}_{1:T} \right) \geq \sum_{\tau=1}^{T} \mathbb{E}_{\mathbf{z}_{1:\tau}} \left[\log p_{\psi} \left(s'_{\tau}, r_{\tau} | s_{\tau}, a_{\tau}\right) - D_{KL}\left(q_\phi \left(a_{\tau} | s_{1:\tau + 1}, r_{1:\tau}, a_{1:\tau - 1}\right)\Vert \pi \left(a_{\tau} | s_{\tau}\right)  \right) \right]
\end{equation}
where we assume factorization of the recognition and generative model and also use the Markov property of the policy distribution.

If we further assume conditional independence of the past information i.e. $t < \tau$ for $q_{\phi}$, then it reduces to a compact formulation being a summation of a series of one-step lower bounds (Equation \eqref{eq:7}). It can be useful for inferring multi-step posteriors simultaneously and fitting a transition model with a longer temporal dependence. Although it is promising to combine with techniques such as RNN \cite{DBLP:journals/corr/abs-1808-03314} or transformer \cite{DBLP:journals/corr/VaswaniSPUJGKP17}, it is beyond our scope and can be a further enhancement of our method.

\section{Related Work}
\paragraph{Intrinsic Motivation}
Intrinsic motivation is the drive to seek out and engage in activities that promote learning, exploration, and curiosity-driven behavior \cite{DBLP:journals/finr/OudeyerK09}. Mutual information has proven effective in diverse domains, including curiosity-driven exploration \cite{DBLP:conf/nips/HouthooftCCDSTA16, DBLP:conf/icml/KimKJLS19}, options discovery \cite{DBLP:conf/iclr/GregorRW17, DBLP:conf/iclr/EysenbachGIL19}, and empowerment maximization \cite{DBLP:conf/nips/MohamedR15, DBLP:conf/isrr/KarlBSBSB19}. Our method differs from empowerment maximization as we do not learn an open-loop distribution to maximize channel capability \cite{DBLP:conf/cec/KlyubinPN05}. Additionally, there exist other techniques for motivating agents from different perspectives, such as model uncertainty \cite{DBLP:conf/icml/PathakG019, DBLP:conf/icml/ShyamJG19, DBLP:journals/corr/abs-1902-07685, DBLP:conf/nips/FuCL17}, count-based exploration \cite{DBLP:conf/icml/OstrovskiBOM17, DBLP:conf/nips/BellemareSOSSM16, DBLP:conf/nips/TangHFSCDSTA17}, and surprise or novelty \cite{DBLP:journals/corr/AchiamS17, DBLP:conf/agi/SunGS11, DBLP:conf/iclr/BurdaEPSDE19}. While entropy is commonly used in model-free algorithms, we treat it as an intrinsic motivation solely encouraging exploration, in contrast to our nature of uncertainty reduction from the environment. \cite{DBLP:conf/nips/LeibfriedPG19} unifies reward and empowerment maximization, but requires extensive application of the Blahut-Arimoto algorithm, whose complexity has posed a challenge to scale to the continuous domain \cite{DBLP:conf/nips/MohamedR15}. Another essential difference is that our method focuses on an efficient policy iteration approach analogous to \cite{DBLP:conf/icml/HaarnojaZAL18}, whereas \cite{DBLP:conf/nips/LeibfriedPG19} manipulates the optimality operator similar to \cite{DBLP:conf/icml/HaarnojaTAL17}, but with far less flexibility on $\mathcal{F}$. 

\paragraph{Incorporating the Future}
\cite{DBLP:conf/iclr/KeSTGBPB19} incorporates information from future observations and actions using a bidirectional recurrent network in an autoregressive manner. To address credit assignment, \cite{DBLP:conf/nips/HarutyunyanDMAP19} introduces the importance ratio between a state-conditional posterior and the policy, measuring the relevance of past decisions to the trajectory return or future state. Compressing the sequence into a compact representation reduces the challenges of long sequence modeling \cite{DBLP:conf/iclr/VenutoLPN22} \cite{DBLP:conf/iclr/KeSTGBPB19}. The same ideas can also be drawn from, particularly when $\mathcal{F}$ is too long to capture useful information, we can instead employ an additional variational model to construct a compact representation $\mathcal{F}_{z}$, which is then used for posterior approximation and model learning. RL Upside Down \cite{DBLP:journals/corr/abs-1912-02875} predicts actions using reward signals and states, resembling our posterior formulation. However, it diverges by redefining the policy, while our method solely informs it. Our approach provides a unified perspective, accommodating different forms of $\mathcal{F}$ within a policy iteration scheme, ensuring convergence.

\section{Conclusion}
In this paper, we introduce a novel learning framework that integrates intrinsic control with model learning. Our algorithm adapts to different types of future sequences, focusing on maximizing the informativeness of future outcomes given executed actions. It guarantees convergence and monotonicity. Our approach opens up possibilities for various algorithmic formulations, including trajectory-wise methods, imitation learning, and direct probabilistic control, with the full utilization of function approximations as future work.

\bibliographystyle{plain}
\small\bibliography{main.bib}

\newpage
\appendix
\onecolumn
\section{Pseudocode of GIO}

\begin{algorithm}[h]
\caption{Generative Intrinsic Optimization}
\label{alg:GIO}
\textbf{Input}: $\eta, \tau$\\
\textbf{Initial Parameter}: $\{q_{\phi}, p_{\psi}\}, Q_{\bar{w}_{i}}, Q_{w_{i}}, \pi_{\theta}$
\begin{algorithmic}
    \FOR{step $t \gets 0, 1, \dots, M - 1$}{
        \STATE Execute policy $\pi_{\theta}$ in the environment
        \STATE Store transition $(s_{t}, a_{t}, r_{t}, s_{t + 1})$ to the replay buffer $\mathcal{D}$
        \STATE Sample mini-batch of $n$ transitions $(s, a, r, s')$ from $\mathcal{D}$
        \STATE Train VAE based on the variational lower bound upon $(p_{\phi}, p_{\psi})$
        \STATE Evaluate $\Delta = (\log q_{\phi}(a' | s', s'', r' ) - \log \pi_{\theta}(a' | s'))$ with new sampled action $a' \sim \pi_{\theta}$ and experience $(s'', r') \sim p_{\psi}$
        \STATE Compute value target $y = r(s, a) + \gamma (\min_{i}Q_{\bar{w}_{i}}(s', a') + \eta \Delta)$
        \STATE Update critic with $\nabla_{w}\mathcal{J}(w)$ (Equation \eqref{eq:11})
        \STATE Update actor with $\nabla_{\theta}\mathcal{J}(\theta)$ (Equation \eqref{eq:12})
        \IF{it is time to plan}
        \STATE Update actor with simulated policy gradient
        \ENDIF
        \STATE Update target net $\bar{w}_{i} \gets (1 - \tau) \bar{w}_{i} + \tau w_{i}, i = 1, 2$
    }
    \ENDFOR
\end{algorithmic}
\end{algorithm}

We provide a potential learning procedure that utilizes the clipped double-Q technique \cite{DBLP:conf/icml/FujimotoHM18} and the reparameterized policy gradient. The action-value function and the policy are parameterized as $Q_{w}$ and $\pi_{\theta}$ respectively.

Denote $\Delta = (\log q_{\phi}(a' | s', s'', r' ) - \log \pi_{\theta}(a' | s'))$, the critic is updated by following fitted Q-iteration \cite{ernst2005tree} \cite{DBLP:conf/l4dc/FanWXY20}
\begin{equation}
  \label{eq:11}
  \mathcal{J}(w) = \mathbb{E}_{(s, a, s', r) \sim \mathcal{D}}\bigl[(Q_{w_{i}}(s, a) - (r + \gamma (\min_{i}Q_{\bar{w}_{i}}(s', a') + \eta \Delta)))^{2} \bigr], i = 1, 2
\end{equation}
where a new action $a' \sim \pi_{\theta}(\cdot | s')$ and experience $(s'', r') \sim p_{\psi}$ are sampled for evaluating the log ratio. The target value network $Q_{\bar{w}_{i}}$ is utilized to stablize the behavior of the neural networks, which is commonly used in off-policy algorithms \cite{DBLP:journals/corr/LillicrapHPHETS15} \cite{DBLP:conf/icml/HaarnojaZAL18} \cite{DBLP:conf/icml/FujimotoHM18}.

And the parameterized policy orients itself to the softmax policy w.r.t. the approximate action-value function and posterior.
\begin{equation}
  \label{eq:12}
  \mathcal{J}(\theta) = \mathbb{E}_{s \sim \mathcal{D}}\left[D_{\text{KL}}\left(\pi_{\theta}(\cdot | s) \Bigl| \Bigr| \mathcal{G}(Q_{w}, q_{\phi}) \right)\right]
\end{equation}
which can also utilize the reparametrization trick \cite{DBLP:journals/corr/KingmaW13}, resulting in a potential lower variance gradient estimator.

\section{Proof of Proposition \ref{prop:1}}
\begin{proof}
By plugging Equation \eqref{eq:4} into \eqref{eq:3}, we have
\begin{equation}
  \label{eq:14}
  \mathcal{T}^{\pi} Q(s_{t}, a_{t}) = r(s_{t}, a_{t}) + \gamma \mathbb{E}_{s_{t + 1}, a_{t + 1}, \mathcal{F}_{t + 1}}[Q(s_{t + 1}, a_{t + 1}) + \eta (\log{p^{\pi}(a_{t + 1} | s_{t + 1}, \mathcal{F}_{t + 1})} - \log{\pi(a_{t + 1} | s_{t + 1})})]
\end{equation}
If we merge the log ratio into reward such that $\tilde{r}_{t} = r_{t} + \gamma \mathbb{E}_{s_{t + 1}, a_{t + 1}, \mathcal{F}_{t + 1}}[\eta (\log{p^{\pi}(a_{t + 1} | s_{t + 1}, \mathcal{F}_{t + 1})} - \log{\pi(a_{t + 1} | s_{t + 1})})]$, we alternatively have
\begin{equation}
  \label{eq:15}
  \mathcal{T}^{\pi} Q(s_{t}, a_{t}) = \tilde{r}(s_{t}, a_{t}) + \gamma \mathbb{E}_{s_{t + 1}, a_{t + 1}}[Q(s_{t + 1}, a_{t + 1})]
\end{equation}

For any $Q_{1}, Q_{2}$ in the action-value space $\mathcal{Q}$
\begin{equation}
  \label{eq:16}
  \begin{aligned}
    \| \mathcal{T}^{\pi}Q_{1} - \mathcal{T}^{\pi}Q_{2} \|_{\infty} & = \sup_{s, a}\bigl|\tilde{r}(s, a) + \gamma \mathbb{E}_{s', a'}[Q_{1}(s', a')] - \tilde{r}(s, a) + \gamma \mathbb{E}_{s', a'}[Q_{2}(s', a')] \bigr| \\
                                                                                                             & = \gamma \sup_{s, a}\bigl|\mathbb{E}_{s', a'}[Q_{1}(s', a') - Q_{2}(s', a')] \bigr| \\
                                                                                                             & \leq \gamma \sup_{s, a}\bigl|\mathbb{E}_{s', a'}[\sup_{s', a'}| Q_{1}(s', a') - Q_{2}(s', a')| ] \bigr|\\
                                                                                                             & = \gamma \sup_{s, a} \sup_{s', a'} | Q_{1}(s', a') - Q_{2}(s', a') | \\
                                                                                                             & = \gamma \sup_{s', a'} | Q_{1}(s', a') - Q_{2}(s', a')| \\
                                                                                                             & = \gamma \| Q_{1} - Q_{2} \|_{\infty}
  \end{aligned}
\end{equation}
This implies $\mathcal{T}^{\pi}$ is a contraction mapping in the metric space $\mathcal{Q}$. From the Banach fixed-point theorem, we know that starting from any initial point $Q$, the sequence $Q_{k + 1} = \mathcal{T}^{\pi}Q_{k}$ converges to a unique fixed point $Q^{\star}$. Since $Q^{\pi}$ solves for $\mathcal{T}^{\pi}$ by definition, it implies $Q^{\pi} = Q^{\star}$.
\end{proof}
\section{Proof of Equation \eqref{eq:5}}
\label{sec:proof-equat-eqref-5}
\begin{proof}
Considering the one-step optimization problem\footnote{For simplicity, our derivation is based on the discrete case, however, the same procedure also applies for the continuous case likewise \cite{DBLP:conf/iclr/AbdolmalekiSTMH18}. Thus, the claims will not degenerate.} with the posterior $p^{\pi}$ and the value function $V^{\pi}$ being fixed
\begin{equation}
  \label{eq:17}
  \begin{aligned}
  V^{\tilde{\pi}, p_{\pi}}(s) & \triangleq \sup_{\pi}\mathbb{E}_{a \sim \pi, \mathcal{F}}\left[r(s, a) + \eta (\log{p^{\pi}(a | s, \mathcal{F})} - \log{\pi(a | s)}) + \gamma \mathbb{E}_{s'}[V^{\pi}(s')]\right] \\
  & = \sup_{\pi}\mathbb{E}_{a \sim \pi, \mathcal{F}}\left[\eta (\log{p^{\pi}(a | s, \mathcal{F})} - \log{\pi(a | s)}) + Q^{\pi}(s, a)\right]    
  \end{aligned}
\end{equation}
Define the Lagrangian function $\mathcal{L}(s; \lambda): \mathcal{S} \rightarrow \mathbb{R}$
\begin{equation}
  \label{eq:18}
  \mathcal{L}(s; \lambda) = \mathbb{E}_{a \sim \pi, \mathcal{F}}\bigr[\eta (\log{p^{\pi}(a | s, \mathcal{F})} - \log{\pi(a | s)}) + Q^{\pi}(s, a)\bigr] - \lambda (\sum\limits_{a \in \mathcal{A}}\pi(a | s) - 1)
\end{equation}
Solving for the first-order equation
\begin{equation}
  \label{eq:19}
  0 = \frac{\partial \mathcal{L}(s; \lambda)}{\partial \pi(a | s)} = Q^{\pi}(s, a) + \eta \mathbb{E}_{\mathcal{F}}\left[\log{p^{\pi}(a | s, \mathcal{F})}\right] - \eta \log{\pi(a | s)} - \eta - \lambda
\end{equation}
Rearranging
\begin{equation}
  \label{eq:20}
  \tilde{\pi} = \exp{(-\frac{\lambda}{\eta} - 1)} \exp{\frac{1}{\eta}(Q^{\pi} + \eta \mathbb{E}_{\mathcal{F}}\left[\log{p^{\pi}(a | s, \mathcal{F})}\right])}
\end{equation}
With the equality constraint
\begin{equation}
  \label{eq:21}
  \sum\limits_{a \in \mathcal{A}} \tilde{\pi}(a | s) = 1
\end{equation}
by applying log transformation on both sides, we can solve for the multiplier as
\begin{equation}
  \label{eq:22}
  \tilde{\lambda} = \eta \log{\sum\limits_{a \in \mathcal{A}} \exp{\frac{1}{\eta}\bigl(Q^{\pi} + \eta \mathbb{E}_{\mathcal{F}}\left[\log{p^{\pi}(a | s, \mathcal{F})}\right])}} - \eta
\end{equation}
inserting which into Equation \eqref{eq:20}, we get
\begin{equation}
  \label{eq:23}
  \tilde{\pi}(a | s) = \frac{\exp{\frac{1}{\eta}\bigl(Q^{\pi} + \eta \mathbb{E}_{\mathcal{F}}\left[\log{p^{\pi}(a | s, \mathcal{F})}\right])}}{\sum\limits_{a \in \mathcal{A}}\exp{\frac{1}{\eta}\bigl(Q^{\pi} + \eta \mathbb{E}_{\mathcal{F}}\left[\log{p^{\pi}(a | s, \mathcal{F})}\right])}}
\end{equation}
For the optimal policy, there must exist a multiplier that jointly satisfy KKT condition. Since $(\tilde{\pi}, \tilde{\lambda})$ uniquely satisfies the KKT condition as above, it implies $\tilde{\pi}$ is the optimal policy. Denote the denominator as $Z^{\pi}(s)$, it completes the proof.
\end{proof}
\section{Proof of Theorem \ref{thm:converge}}
\begin{lemma}
  \label{lm:kl}
  Let $p(x, y)$ be the joint distribution, and $p(x)$ and $p(y)$ be the marginal distribution correspondingly, then for any distribution $q(y)$, it holds that
  \begin{equation}
    \label{eq:24}
    D_{\text{KL}}(p(x, y) \| p(y)p(x)) \leq D_{\text{KL}}(p(x, y) \| q(y)p(x))
  \end{equation}
\end{lemma}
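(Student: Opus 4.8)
The plan is to compute the difference of the two KL divergences directly and show that it is a non-negative quantity, in fact another KL divergence. First I would write out both sides using the definition $D_{\text{KL}}(p \| m) = \mathbb{E}_{p}[\log(p/m)]$, with the expectation in each case taken over the common first argument $p(x,y)$. Since both divergences share this first argument, and since both second arguments contain the common factor $p(x)$, subtracting $D_{\text{KL}}(p(x, y) \| p(y)p(x))$ from $D_{\text{KL}}(p(x, y) \| q(y)p(x))$ causes the $\log p(x,y)$ and $\log p(x)$ contributions to cancel identically.

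The residual after this cancellation is $\mathbb{E}_{p(x,y)}\left[\log \frac{p(y)}{q(y)}\right]$. The next step is to note that the integrand depends on $y$ alone, so the expectation over the joint collapses to an expectation over the marginal $p(y)$, giving exactly $D_{\text{KL}}(p(y) \| q(y))$. By the non-negativity of KL divergence (Gibbs' inequality), this term is $\geq 0$, and rearranging yields the claimed inequality $D_{\text{KL}}(p(x, y) \| p(y)p(x)) \leq D_{\text{KL}}(p(x, y) \| q(y)p(x))$.

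The only point requiring care is bookkeeping the cancellation of the shared $p(x)$ factor in the two denominators and then justifying the reduction of the joint expectation to a marginal one; I expect no genuine analytic obstacle here. Conceptually, the result is the familiar statement that the mutual information $D_{\text{KL}}(p(x,y) \| p(x)p(y))$ is the tightest member of its variational family, the slack against any surrogate $q(y)$ being precisely $D_{\text{KL}}(p(y) \| q(y))$. This is exactly the property that legitimizes replacing the intractable marginal $p^{\pi}(\mathcal{F} \mid s)$ by a learned variational distribution when estimating the mutual information.
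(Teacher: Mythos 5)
Your proposal is correct and follows essentially the same route as the paper's proof: subtract the two divergences, cancel the shared $\log p(x,y)$ and $\log p(x)$ terms, collapse the residual $\mathbb{E}_{p(x,y)}[\log(p(y)/q(y))]$ to the marginal expectation, and invoke non-negativity of $D_{\text{KL}}(p(y)\|q(y))$. No gaps.
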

\begin{proof}
  Denote $\Gamma_{x, y}$ as either $\int_{x, y}$ for continuous case or $\sum\limits_{x, y}$ for discrete case, by non-negativity of KL divergence, it follows that
  \begin{equation}
    \label{eq:25}
    \begin{aligned}
    & D_{\text{KL}}(p(x, y) \| q(y)p(x)) - D_{\text{KL}}(p(x, y) \| p(y)p(x)) \\
    & = \Gamma_{x, y}p(x, y) \log {\frac{p(x, y)}{p(x) p(y)}} - \Gamma_{x, y}p(x, y) \log {\frac{p(x, y)}{p(x) q(y)}} \\
    & = \Gamma_{x, y}p(x, y) \log {\frac{p(y)}{q(y)}} \\
    & = \Gamma_{y}p(y) \log {\frac{p(y)}{q(y)}} \\
    & = D_{\text{KL}}(p(y) \| q(y)) \\
    & \geq 0      
    \end{aligned}
  \end{equation}
  which completes the proof.
\end{proof}
\begin{corollary}
  \label{cor:post}
  For any distribution $q(x | y)$, it holds that
  \begin{equation}
    \label{eqq:24}
    \mathbb{E}_{p(x, y)}\left[\frac{\log p(x | y)}{p(x)}\right] \geq \mathbb{E}_{p(x, y)}\left[\frac{\log q(x | y)}{p(x)}\right]
  \end{equation}  
\end{corollary}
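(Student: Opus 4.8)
The plan is to read both sides as log-ratios, so that the left-hand side is the mutual information $\mathbb{E}_{p(x,y)}[\log\frac{p(x|y)}{p(x)}] = \mathcal{I}(x;y)$ and the right-hand side is the variational surrogate obtained by substituting an arbitrary candidate posterior $q(x|y)$ for the true $p(x|y)$. I would then establish the inequality by showing that the gap between the two sides is a nonnegative (expected) KL divergence, exactly the maneuver used in Lemma \ref{lm:kl}, now applied to the conditional $x \mid y$ rather than the marginal $y$.

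First I would subtract the right-hand side from the left-hand side. Since both integrands carry the same $\log p(x)$ term, these cancel and the gap collapses to $\mathbb{E}_{p(x,y)}\bigl[\log\frac{p(x|y)}{q(x|y)}\bigr]$. Next I would factor the joint as $p(x,y) = p(y)\,p(x|y)$ and take the expectation over $y$ outside, rewriting the gap as $\mathbb{E}_{p(y)}\bigl[D_{\text{KL}}(p(\cdot|y)\,\|\,q(\cdot|y))\bigr]$. Finally, nonnegativity of the KL divergence for each fixed $y$ forces the integrand to be nonnegative, hence so is the expectation, which is precisely the asserted inequality; equality holds iff $q(\cdot|y) = p(\cdot|y)$ for $p(y)$-almost every $y$. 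A one-line alternative is to observe that the same gap equals $D_{\text{KL}}(p(x,y)\,\|\,q(x|y)p(y))$, a KL divergence between two bona fide joint distributions and therefore nonnegative.

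I do not anticipate any real obstacle: once the $\log p(x)$ cancellation is performed, the result is an immediate consequence of the non-negativity of KL, mirroring the computation already given for Lemma \ref{lm:kl}. The only point deserving a brief remark is well-definedness of the right-hand side. If $q(x|y)=0$ on a set of positive $p$-measure where $p(x|y)>0$, the surrogate is $-\infty$ and the inequality holds vacuously; otherwise all integrals are finite and the cancellation above is fully justified.
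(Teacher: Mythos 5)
Your argument is correct and matches the paper's own proof essentially line for line: both subtract the right-hand side from the left, cancel the common $\log p(x)$ term, factor $p(x,y)=p(y)\,p(x|y)$, and identify the gap as $\mathbb{E}_{p(y)}\bigl[D_{\text{KL}}(p(\cdot|y)\,\|\,q(\cdot|y))\bigr]\geq 0$. Your added remarks on the equality case and on well-definedness when $q(x|y)=0$ go slightly beyond what the paper records, but the core derivation is the same.
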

\begin{proof}
  The proof is similar to that of the previous lemma, by non-negativity of KL divergence, it follows that
  \begin{equation}
    \label{eqq:25}
    \begin{aligned}
    & \mathbb{E}_{p(x, y)}\left[\frac{\log p(x | y)}{p(x)}\right] - \mathbb{E}_{p(x, y)}\left[\frac{\log q(x | y)}{p(x)}\right] \\
    & = \Gamma_{x, y}p(x, y) \log {\frac{p(x | y)}{q(x | y)}} \\
    & = \Gamma_{x, y} p(y) p(x | y) \log {\frac{p(x | y)}{q(x | y)}} \\
    & = \mathbb{E}_{p(y)}\left[D_{\text{KL}}(p(x | y) \| q(x | y))\right] \\
    & \geq 0
    \end{aligned}
  \end{equation}
\end{proof}
We will formally give a proof of the theorem.
\begin{proof}
We first investigate the optimal intrinsic Bellman operator $\mathcal{T}^{\star}$, and then relate it with any intermediate operator $\mathcal{T}^{\pi_{k}}$.

As defined previously, the optimal policy is $\pi^{\star} = \argmax_{\pi} V^{\pi}$, whose corresponding optimal value function thereby is $V^{\pi^{\star}}$. It should satisfy the intrinsic Bellman equation, therefore $Q^{\pi^{\star}}$ is defined as follows
\begin{equation}
  \label{eq:26}
  Q^{\pi^{\star}}(s, a) = r(s, a) + \gamma \mathbb{E}_{s', a' \sim \pi^{\star}, \mathcal{F}'}\left[Q^{\pi^{\star}}(s', a') + \eta (\log{p^{\pi^{\star}}(a' | \mathcal{F}', s')} - \log{\pi(a' | s')}) \right]
\end{equation}
By Proposition \ref{prop:1}, it turns out that $\mathcal{T}^{\star}Q^{\pi^{\star}} = Q^{\pi^{\star}}$.

Now we relate it to $\mathcal{T}^{\pi_{k}}$, considering $\mathcal{T}^{\star}Q^{\pi_{k}}, \forall k \geq 0$, which can be bounded as
\begin{equation}
  \label{eq:27}
  \begin{aligned}
  \mathcal{T}^{\star}Q^{\pi_{k}} & = r(s, a) + \gamma \mathbb{E}_{s', a' \sim \pi^{\star}, \mathcal{F}'}\left[Q^{\pi_{k}}(s', a') + \eta (\log{p^{\pi^{\star}}(a' | \mathcal{F}', s')} - \log{\pi^{\star}(a' | s')}) \right] \\
  & = r(s, a) + \gamma \mathbb{E}_{s', a' \sim \pi^{\star}, \mathcal{F}'}\left[Q^{\pi_{k}}(s', a') + \eta (\log{\frac{p(\mathcal{F}' | s', a') \pi^{\star}(a' | s')}{\sum\limits_{a'} p(\mathcal{F}' | s', a') \pi^{\star}(a' | s') }} - \log{\pi^{\star}(a' | s')}) \right] \\
  & = r(s, a) + \gamma \mathbb{E}_{s', a' \sim \pi^{\star}, \mathcal{F}'}\left[Q^{\pi_{k}}(s', a') + \eta (\log{\frac{p(\mathcal{F}' | s', a')}{\sum\limits_{a'} p(\mathcal{F}' | s', a') \pi^{\star}(a' | s') }}) \right] \\
  & = r(s, a) + \gamma \mathbb{E}_{s', a' \sim \pi^{\star}, \mathcal{F}'}\left[Q^{\pi_{k}}(s', a') + \eta (\log{\frac{p(\mathcal{F}' | s', a')}{p^{\pi^{\star}}(\mathcal{F}' | s')}}) \right] \\
  & = r(s, a) + \gamma \mathbb{E}_{s', a' \sim \pi^{\star}, \mathcal{F}'}\left[Q^{\pi_{k}}(s', a') + \eta (\log{\frac{p(\mathcal{F}' | s', a') \pi^{\star}(a' | s')}{p^{\pi^{\star}}(\mathcal{F}' | s') \pi^{\star}(a' | s')}}) \right] \\
  & \leq r(s, a) + \gamma \mathbb{E}_{s', a' \sim \pi^{\star}, \mathcal{F}'}\left[Q^{\pi_{k}}(s', a') + \eta (\log{\frac{p(\mathcal{F}' | s', a') \pi^{\star}(a' | s')}{\underbrace{\sum\limits_{a'} p(\mathcal{F}' | s', a') \pi_{k}(a' | s')}_{q(\mathcal{F}' | s')} \pi^{\star}(a' | s')}}) \right] && \vartriangleright \text{by Lemma \ref{lm:kl}} \\
  & = r(s, a) + \gamma \mathbb{E}_{s', a' \sim \pi^{\star}, \mathcal{F}'}\left[Q^{\pi_{k}}(s', a') + \eta (\log{\frac{p(\mathcal{F}' | s', a')}{\sum\limits_{a'} p(\mathcal{F}' | s', a') \pi_{k}(a' | s')} }) \right] \\
  & = r(s, a) + \gamma \mathbb{E}_{s', a' \sim \pi^{\star}, \mathcal{F}'}\left[Q^{\pi_{k}}(s', a') + \eta (\log{\frac{p(\mathcal{F}' | s', a') \pi_{k}(a' | s')}{\sum\limits_{a'} p(\mathcal{F}' | s', a') \pi_{k}(a' | s')} } - \log{\pi_{k}(a' | s')})\right] \\
  & = r(s, a) + \gamma \mathbb{E}_{s', a' \sim \pi^{\star}, \mathcal{F}'}\left[Q^{\pi_{k}}(s', a') + \eta (\log{p^{\pi_{k}}(a' | \mathcal{F}', s')} - \log{\pi_{k}(a' | s')})\right]
  \end{aligned}
\end{equation}
where $q(\mathcal{F}' | s')$ is a well-defined probability since
\begin{equation}
  \label{eq:28}
  \begin{aligned}
      \sum\limits_{\mathcal{F}'}q(\mathcal{F}' | s') & = \sum\limits_{\mathcal{F}'} \sum\limits_{a'} p(\mathcal{F}' | s', a') \pi_{k}(a' | s') \\
  & = \sum\limits_{a'} \sum\limits_{\mathcal{F}'} p(\mathcal{F}' | s', a') \pi_{k}(a' | s') \\
  & = \sum\limits_{a'} 1 \cdot \pi_{k}(a' | s') \\
  & = 1
  \end{aligned}
\end{equation}
By plugging Equation \eqref{eq:23} into Equation \eqref{eq:17} in proof \ref{sec:proof-equat-eqref-5}, we can get
\begin{equation}
  \label{eqq:29}
  \begin{aligned}
  \mathcal{T}^{\pi_{k}}Q^{\pi_{k}}(s, a) & = r(s, a) + \gamma \mathbb{E}_{s'}[V^{\pi_{k}}(s')] \\
  & \leq r(s, a) + \gamma \mathbb{E}_{s'}[V^{\pi_{k + 1}, p^{\pi_{k}}}(s')] \\
  & = r(s, a) + \gamma \mathbb{E}_{s'}\left[\eta\log{\sum\limits_{a'}\exp{\frac{1}{\eta} \left(Q^{\pi_{k}}(s', a') + \eta \mathbb{E}_{\mathcal{F}'}\left[\log{p^{\pi_{k}}(a' | s', \mathcal{F}')}\right]\right)}}\right] \\
  & = r(s, a) + \gamma \mathbb{E}_{s'}\left[\eta\log{Z^{\pi_{k}}(s')}\right] \\
  & \triangleq \mathcal{T}^{\pi_{k + 1}, p^{\pi_{k}}}Q^{\pi_{k}}(s, a)
  \end{aligned}
\end{equation}
With a useful identity from taking logarithm of both sides of $\pi_{k + 1}(a' | s')$ (Equation \eqref{eq:23})
\begin{equation}
  \label{eq:29}
  Q^{\pi_{k}}(s', a') + \eta \log{p^{\pi_{k}}(a' | \mathcal{F}', s')} = \eta(\log{\pi_{k + 1}(a' | s')} + \log{Z^{\pi_{k}}(s')})
\end{equation}
we have an upper bound between $\mathcal{T}^{\star}Q^{\pi_{k}}$ and $\mathcal{T}^{\pi_{k + 1}, p^{\pi_{k}}}Q^{\pi_{k}}$ for $\forall k$
\begin{equation}
  \label{eq:30}
  \begin{aligned}
  & \mathcal{T}^{\star}Q^{\pi_{k}}(s, a) - \mathcal{T}^{\pi_{k + 1}, p^{\pi_{k}}}Q^{\pi_{k}}(s, a)\\
  & \leq r(s, a) + \gamma \mathbb{E}_{s', a' \sim \pi^{\star}, \mathcal{F}'}\left[Q^{\pi_{k}}(s', a') + \eta (\log{p^{\pi_{k}}(a' | \mathcal{F}', s')} - \log{\pi_{k}(a' | s')})\right] - (r(s, a) + \gamma \mathbb{E}_{s'}\left[\eta\log{Z^{\pi_{k}}(s')}\right]) \\
  & = \gamma \mathbb{E}_{s', a' \sim \pi^{\star}, \mathcal{F}'}\left[\eta (\log{\pi_{k + 1}(a' | s')} + \log{Z^{\pi_{k}}(s')}) - \eta \log{\pi_{k}(a' | s')}\right] - \gamma \mathbb{E}_{s'}\left[\eta\log{Z^{\pi_{k}}(s')}\right] \\  
  & = \gamma \mathbb{E}_{s', a' \sim \pi^{\star}}\left[\eta (\log{\pi_{k + 1}(a' | s')} - \log{\pi_{k}(a' | s')})\right]
  \end{aligned}
\end{equation}
Therefore, for an integer $n \geq 1$
\begin{equation}
  \label{eq:31}
  \begin{aligned}
  \frac{1}{n}\sum\limits_{k=0}^{n-1} \mathcal{T}^{\star}Q^{\pi_{k}}(s, a) - \mathcal{T}^{\pi_{k + 1}, p^{\pi_{k}}}Q^{\pi_{k}}(s, a) & \leq \frac{\eta\gamma}{n}\sum\limits_{k=0}^{n-1} \mathbb{E}_{s', a' \sim \pi^{\star}}\left[ \log{\pi_{k + 1}(a' | s')} - \log{\pi_{k}(a' | s')}\right] \\
  & = \frac{\eta\gamma}{n} \mathbb{E}_{s', a' \sim \pi^{\star}}\left[ \log{\frac{\pi_{n}(a' | s')}{\pi_{0}(a' | s')}} \right] \\
  & \leq \frac{\eta\gamma}{n} \mathbb{E}_{s', a' \sim \pi^{\star}}\left[ \log{\frac{\pi^{\star}(a' | s')}{\pi_{0}(a' | s')}} \right]
  \end{aligned}
\end{equation}
where the last inequality is from that cross entropy is always greater than the entropy i.e. $H(\pi^{\star}, \pi_{n}) \geq H(\pi^{\star}), \forall n$, due to non-negativity of KL divergence, and reverse the sign, it follows. By assumption of $H(\pi^{\star})$ being bounded and $\pi_{0}$ non-zero everywhere, the upper bound approaches to zero as $n \rightarrow \infty$.

In the next step, we will find a lower bound on the Equation \eqref{eq:31}.
By Corollary \ref{cor:post}, we have
\begin{equation}
  \label{eq:32}
  \begin{aligned}
  \mathcal{T}^{\pi_{k + 1}, p^{\pi_{k}}}Q^{\pi_{k}}(s, a) & = r(s, a) + \gamma \mathbb{E}_{s', a' \sim \pi_{k + 1}, \mathcal{F}'}\left[Q^{\pi_{k}}(s', a') + \eta (\log{p^{\pi_{k}}(a' | \mathcal{F}', s')} - \log{\pi_{k + 1}(a' | s')})\right] \\
  & \leq r(s, a) + \gamma \mathbb{E}_{s', a' \sim \pi_{k + 1}, \mathcal{F}'}\left[Q^{\pi_{k}}(s', a') + \eta (\log{p^{\pi_{k + 1}}(a' | \mathcal{F}', s')} - \log{\pi_{k + 1}(a' | s')})\right] \\
  & = \mathcal{T}^{\pi_{k + 1}} Q^{\pi_{k}}(s, a)
  \end{aligned}
\end{equation}
Re-implementing the same justifications of Equation \eqref{eq:27}, we further have
\begin{equation}
  \label{eq:33}
  \begin{aligned}
  \mathcal{T}^{\pi_{k + 1}, p^{\pi_{k}}}Q^{\pi_{k}}(s, a) & \leq \mathcal{T}^{\pi_{k + 1}} Q^{\pi_{k}}(s, a) \\
  & \leq r(s, a) + \gamma \mathbb{E}_{s', a' \sim \pi_{k + 1}, \mathcal{F}'}\left[Q^{\pi_{k}}(s', a') + \eta (\log{p^{\pi^{\star}}(a' | \mathcal{F}', s')} - \log{\pi^{\star}(a' | s')})\right]
  \end{aligned}
\end{equation}
Therefore
\begin{equation}
  \label{eq:34}
  \begin{aligned}
  & \mathcal{T}^{\star}Q^{\pi_{k}}(s, a) - \mathcal{T}^{\pi_{k + 1}, p^{\pi_{k}}}Q^{\pi_{k}}(s, a)\\
  & \geq r(s, a) + \gamma \mathbb{E}_{s', a' \sim \pi^{\star}, \mathcal{F}'}\left[Q^{\pi_{k}}(s', a') + \eta (\log{p^{\pi^{\star}}(a' | \mathcal{F}', s')} \log{\pi^{\star}(a' | s')})\right] - \\
  &\quad\  (r(s, a) + \gamma \mathbb{E}_{s', a' \sim \pi_{k + 1}, \mathcal{F}'}\left[Q^{\pi_{k}}(s', a') + \eta (\log{p^{\pi^{\star}}(a' | \mathcal{F}', s')} - \log{\pi^{\star}(a' | s')})\right]) \\
  & = \gamma \mathbb{E}_{s', (a', \mathcal{F}') \sim ((p^{\pi^{\star}}(\mathcal{F}', a' | s') - p^{\pi_{k + 1}}(\mathcal{F}', a' | s')))}\left[\eta (\log{p^{\pi^{\star}}(a' | \mathcal{F}', s')} - \log{\pi^{\star}(a' | s')}) + Q^{\pi_{k}}\right]\footnotemark
  \end{aligned}
\end{equation}
\footnotetext{We merge $(a', \mathcal{F}')$ together for the reason that $p(\mathcal{F}' | s', a')$ may have a complex dependency on $\pi$ as $\mathcal{F}$ becomes longer.}
Summing together, we have
\begin{equation}
  \label{eq:35}
  \begin{aligned}
  & \frac{1}{n}\sum\limits_{k=0}^{n-1} \mathcal{T}^{\star}Q^{\pi_{k}}(s, a) - \mathcal{T}^{\pi_{k + 1}, p^{\pi_{k}}}Q^{\pi_{k}}(s, a) \\
  & \geq \gamma \frac{1}{n}\sum\limits_{k=0}^{n-1} \mathbb{E}_{s', (a', \mathcal{F}') \sim (p^{\pi^{\star}}(\mathcal{F}', a' | s') - p^{\pi_{k + 1}}(\mathcal{F}', a' | s'))}\left[\eta (\log{p^{\pi^{\star}}(a' | \mathcal{F}', s')} - \log{\pi^{\star}(a' | s')}) + Q^{\pi_{k}}\right]
  \end{aligned}
\end{equation}
% Since $\lim_{n \rightarrow \infty}\sum\limits_{k=0}^{n}\mathbb{E}_{s', (a', \mathcal{F}') \sim ((p^{\pi^{\star}}(\mathcal{F}', a' | s') - p^{\pi_{k + 1}}(\mathcal{F}', a' | s')))}\left[\eta (\log{p^{\pi^{\star}}(a' | \mathcal{F}', s')} - \log{\pi^{\star}(a' | s')}) + Q^{\pi_{k}}\right]$ exists or if not, goes positive infinity, the lower bound approaches to zero or greater than zero as $n \rightarrow \infty$.
Since $\lim_{n \rightarrow \infty}\sum\limits_{k=0}^{n}\mathbb{E}_{s', (a', \mathcal{F}') \sim ((p^{\pi^{\star}}(\mathcal{F}', a' | s') - p^{\pi_{k + 1}}(\mathcal{F}', a' | s')))}\left[\eta (\log{p^{\pi^{\star}}(a' | \mathcal{F}', s')} - \log{\pi^{\star}(a' | s')}) + Q^{\pi_{k}}\right]$ exists, the lower bound approaches to zero as $n \rightarrow \infty$.

Combining those two ends, we conclude that $\frac{1}{n}\sum\limits_{k=0}^{n-1} \mathcal{T}^{\star}Q^{\pi_{k}}(s, a) - \mathcal{T}^{\pi_{k + 1}, p^{\pi_{k}}}Q^{\pi_{k}}(s, a)$ approaches zero as $n \rightarrow \infty$, which implies $\lim_{n \rightarrow \infty} \sum\limits_{k=0}^{n-1} \mathcal{T}^{\star}Q^{\pi_{k}}(s, a) - \mathcal{T}^{\pi_{k + 1}, p^{\pi_{k}}}Q^{\pi_{k}}(s, a)$ exists. It immediately follows that $\lim_{n \rightarrow \infty} \left(\mathcal{T}^{\star}Q^{\pi_{k}}(s, a) - \mathcal{T}^{\pi_{k + 1}, p^{\pi_{k}}}Q^{\pi_{k}}(s, a)\right) = 0$. It is also held for $\lim_{n \rightarrow \infty} \left(\mathcal{T}^{\star}Q^{\pi_{k}}(s, a) - \mathcal{T}^{\pi_{k + 1}}Q^{\pi_{k}}(s, a)\right) = 0$, since $\mathcal{T}^{\pi_{k + 1}}$ is bounded below by $\mathcal{T}^{\pi_{k + 1}, p^{\pi_{k}}}$.
We also note
\begin{equation}
  \label{eq:36}
  \lim_{n \rightarrow \infty} \mathcal{T}^{\pi_{k + 1}}Q^{\pi_{k}}(s, a) = \mathcal{T}^{\pi_{\infty}}Q^{\pi_{\infty}}(s, a) = Q^{\pi_{\infty}}(s, a)
\end{equation}
And it follows that $\|\mathcal{T}^{\star}(Q^{\pi_{k}} - Q^{\pi_{\infty}})\| \leq \|\mathcal{T}^{\star}\| \|Q^{\pi_{k}} - Q^{\pi_{\infty}} \|$. Since $\mathcal{T}^{\pi^{\star}}$ is a bounded linear operator, and $Q^{\pi_{k}} \rightarrow Q^{\pi_{\infty}}$, it implies that $\lim_{n \rightarrow \infty} \mathcal{T}^{\star}Q^{\pi_{k}}(s, a) = \mathcal{T}^{\star}Q^{\pi_{\infty}}(s, a)$. Comparing those terms, we have $\mathcal{T}^{\star}Q^{\pi_{\infty}}(s, a) = Q^{\pi_{\infty}}(s, a)$. However, since $\mathcal{T}^{\pi^{\star}}$ has a unique fixed point, it implies that $Q^{\pi_{\infty}}(s, a) = Q^{\pi^{\star}}(s, a)$.  
\end{proof}
\section{Proof of Theorem \ref{thm:monotonic}}
\begin{proof}
Since $\tilde{\pi}$ minimizes the projection loss, then it follows that
\begin{equation}
  \label{eq:37}
  \begin{aligned}
  & \mathbb{E}_{a_{t} \sim \tilde{\pi}, \mathcal{F}_{t}}\left[\eta (\log{\tilde{\pi}(a_{t} | s_{t})} - \log p^{\pi}(a_{t} | \mathcal{F}_{t}, s_{t})) - Q^{\pi}(s_{t}, a_{t}) + \eta \log{Z^{\pi}(s_{t})}\right] \\
  & \leq \mathbb{E}_{a_{t} \sim \pi, \mathcal{F}_{t}}\left[\eta (\log{\pi(a_{t} | s_{t})} - \log p^{\pi}(a_{t} | \mathcal{F}_{t}, s_{t})) - Q^{\pi}(s_{t}, a_{t}) + \eta \log{Z^{\pi}(s_{t})}\right]    
  \end{aligned}
\end{equation}
Since the partition function is dependent only on state and not relies on $\tilde{\pi}$, thus it can be canceled out from both sides. Rearranging, we have
\begin{equation}
  \label{eq:38}
  V^{\pi}(s_{t}) \leq \mathbb{E}_{a_{t} \sim \tilde{\pi}, \mathcal{F}_{t}}\left[Q^{\pi}(s_{t}, a_{t}) + \eta (\log{p^{\pi}(a_{t} | \mathcal{F}_{t}, s_{t})} - \log{\tilde{\pi}(a_{t} | s_{t})})\right]
\end{equation}
Define $\mathcal{I}^{\tilde{\pi}, p^{\pi}}(a, \mathcal{F} | s)$ as follows
\begin{equation}
  \label{eq:39}
\begin{aligned}
  \mathcal{I}^{\tilde{\pi}, p^{\pi}}(a, \mathcal{F}| s) = \mathbb{E}_{\tilde{\pi}(a | s) p(\mathcal{F} | s, a)}\left[\log \frac{p^{\pi}(a | \mathcal{F}, s)}{\tilde{\pi}(a | s)}\right]
\end{aligned}
\end{equation}
By repeatedly applying above inequality, we have
\begin{equation}
  \label{eq:40}
  \begin{aligned}
      Q^{\pi}(s_{t}, a_{t}) & = r(s_{t}, a_{t}) + \gamma \mathbb{E}_{s_{t + 1}}[V^{\pi}(s_{t + 1})] \\
  & \leq r(s_{t}, a_{t}) + \gamma \mathbb{E}_{s_{t + 1}}[\mathbb{E}_{a_{t + 1} \sim \tilde{\pi}, \mathcal{F}_{t + 1}}\left[Q^{\pi}(s_{t + 1}, a_{t + 1}) + \eta (\log{p^{\pi}(a_{t + 1} | \mathcal{F}_{t + 1}, s_{t + 1})} - \log{\tilde{\pi}(a_{t + 1} | s_{t + 1})})\right] \\
  & = r(s_{t}, a_{t}) + \gamma \eta \mathbb{E}_{s_{t + 1}} \left[\mathcal{I}^{\tilde{\pi}, p^{\pi}}(a_{t + 1}, \mathcal{F}_{t + 1}| s_{t + 1})\right] + \gamma \mathbb{E}_{s_{t + 1}}\left[\mathbb{E}_{a_{t + 1} \sim \tilde{\pi}, , \mathcal{F}_{t + 1}}\left[r(s_{t + 1}, a_{t + 1}) + \gamma \mathbb{E}_{s_{t + 2}}[V^{\pi}(s_{t + 2})]\right]\right] \\
  & \vdots \\
  & \leq \mathbb{E}_{s_{t + 1}, a_{t + 1}, \dots | \tilde{\pi}}\Biggl[\sum\limits_{l=0}^{\infty}\gamma^{l}(r(s_{t + l}, a_{t + l}) + \gamma \alpha \eta \mathbb{E}_{s_{t + l + 1}} \left[\mathcal{I}^{\tilde{\pi}, p^{\pi}}(a_{t + l + 1}, \mathcal{F}_{t + l + 1}| s_{t + l + 1})\right])\Biggr] \\
  & \triangleq Q^{\tilde{\pi}, p^{\pi}}(s_{t}, a_{t})
  \end{aligned}
\end{equation}
By Corollary \ref{cor:post}, we have
\begin{equation}
  \label{eq:41}
  \mathcal{I}^{\tilde{\pi}, p^{\pi}}(a, \mathcal{F}| s) \leq \mathcal{I}^{\tilde{\pi}}(a, \mathcal{F}| s)
\end{equation}
Therefore
\begin{equation}
  \label{eq:42}
  Q^{\pi}(s_{t}, a_{t}) \leq Q^{\tilde{\pi}, p^{\pi}}(s_{t}, a_{t}) \leq Q^{\tilde{\pi}}(s_{t}, a_{t})
\end{equation}  
\end{proof}
\section{Derivation of Lower Bounds}
\subsection{One-step Lower Bound}
We will present a more general lower bound considering future sequence $\mathcal{F}$ by using importance sampling and Jensen's inequality
\begin{equation}
  \label{eq:43}
  \begin{aligned}
  \log p^{\pi}(\mathcal{F} | s) & = \log \int_{a} p^{\pi}(\mathcal{F}, a | s) da \\
  & = \log \mathbb{E}_{a \sim q_{\phi}(a | \mathcal{F}, s)}\left[ \frac{p^{\pi}(\mathcal{F}, a | s)}{q_{\phi}(a | \mathcal{F}, s)}\right]  \\
  & \geq \mathbb{E}_{a \sim q_{\phi}(a | \mathcal{F}, s)} \left[\log{\frac{p^{\pi}(\mathcal{F}, a | s)}{q_{\phi}(a | \mathcal{F}, s)}} \right] \\
  & = \mathbb{E}_{q_{\phi}(a | \mathcal{F}, s)} \left[\log{p_{\psi}(\mathcal{F} | s, a)}\right] - D_{\text{KL}}(q_{\phi}(a | \mathcal{F}, s) || \pi(a | s))    
  \end{aligned}
\end{equation}
When $\mathcal{F} = (s', r)$, we can get the one-step variational lower bound
\begin{equation}
  \label{eq:44}
  \begin{aligned}
    \log p^{\pi}(s', r | s) & \geq \mathcal{L}(\phi, \psi; s, s', r)\\
  & = -D_{\text{KL}}(q_{\phi}(a | s, s', r) || \pi(a | s)) + \mathbb{E}_{q_{\phi}(a | s, s', r)}[\log p_{\psi}(s', r | s, a)]    
  \end{aligned}
\end{equation}
\subsection{Trajectory-Wise Lower Bound}
Denote $\mathbf{x}_{1:T} = {(s_{t + 1}, r_{t} | s_{t})}_{t = 1}^{T}$ and $\mathbf{z}_{1:T} = {(a_{t} | s_{t})}_{t = 1}^{T}$, we assume the joint distribution $p^{\pi} \left(\mathbf{x}_{1:T}, \mathbf{z}_{1:T} \right)$ and $q_{\phi} \left( \mathbf{z}_{1:T} \vert \mathbf{x}_{1:T} \right)$ can be factorized as follows
\begin{equation}
  \label{eq:45}
  \begin{aligned}
  p^{\pi} \left(\mathbf{x}_{1:T}, \mathbf{z}_{1:T} \right) & = \prod_{\tau = 1}^{T} p_{\psi}(s'_{\tau}, r_{\tau} | s_{\tau}, a_{\tau}) p^{\pi}(a_{\tau} | s_{1:\tau}, a_{1:\tau - 1}) \\
  & = \prod_{\tau = 1}^{T} p_{\psi}(s'_{\tau}, r_{\tau} | s_{\tau}, a_{\tau}) \pi(a_{\tau} | s_{\tau}) && \vartriangleright \text{by Markov property}
  \end{aligned}
\end{equation}

\begin{equation}
  \label{eq:46}
  q_{\phi} \left( \mathbf{z}_{1:T} \vert \mathbf{x}_{1:T} \right) = \prod_{\tau = 1}^{T} q_{\phi}(a_{\tau} | s_{1:\tau + 1}, r_{1:\tau}, a_{1:\tau - 1})
\end{equation}
In a similar fashion
\begin{equation}
  \label{eq:47}
  \begin{aligned}
  \log p^{\pi} \left(\mathbf{x}_{1:T} \right) & = \int_{\mathbf{z}_{1:T}} p \left(\mathbf{x}_{1:T}, \mathbf{z}_{1:T} \right) d\mathbf{z}_{1:T} \\
  &= \log \mathbb{E}_{\mathbf{z}_{1:T} \sim q_\phi \left( \mathbf{z}_{1:T} \vert \mathbf{x}_{1:T}\right)} \left[ \frac{p \left(\mathbf{x}_{1:T}, \mathbf{z}_{1:T} \right) }{q_\phi \left( \mathbf{z}_{1:T} \vert \mathbf{x}_{1:T}\right)} \right] \\
  &\geq \mathbb{E}_{\mathbf{z}_{1:T}} \left[\log \frac{p \left(\mathbf{x}_{1:T}, \mathbf{z}_{1:T} \right) }{q_\phi \left( \mathbf{z}_{1:T} \vert \mathbf{x}_{1:T}\right)}\right] \\
  &= \mathbb{E}_{\mathbf{z}_{1:T}} \left[\log \frac{\prod_{\tau = 1}^{T} p_{\psi}(s'_{\tau}, r_{\tau} | s_{\tau}, a_{\tau}) \pi(a_{\tau} | s_{\tau})}{\prod_{\tau = 1}^{T} q_{\phi}(a_{\tau} | s_{1:\tau + 1}, r_{1:\tau}, a_{1:\tau - 1})}\right] \\
  &= \mathbb{E}_{\mathbf{z}_{1:T}} \left[\sum_{\tau=1}^T \log p_{\psi}(s'_{\tau}, r_{\tau} | s_{\tau}, a_{\tau}) + \log \pi(a_{\tau} | s_{\tau}) - \log q_{\phi}(a_{\tau} | s_{1:\tau + 1}, r_{1:\tau}, a_{1:\tau - 1})  \right] \\
  &= \sum_{\tau=1}^{T} \mathbb{E}_{\mathbf{z}_{1:\tau}} \left[\log p_{\psi} \left(s'_{\tau}, r_{\tau} | s_{\tau}, a_{\tau}\right) - D_{KL}\left(q_\phi \left(a_{\tau} | s_{1:\tau + 1}, r_{1:\tau}, a_{1:\tau - 1}\right)\Vert \pi \left(a_{\tau} | s_{\tau}\right)  \right) \right]
  \end{aligned}
\end{equation}

If we further assume conditional independence of the past information i.e. $t < \tau$ for $q_{\phi}$, then we have $q_\phi \left(a_{\tau} | s_{1:\tau + 1}, r_{1:\tau}, a_{1:\tau - 1}\right) = q_{\phi} \left(a_{\tau} | s_{\tau + 1}, r_{\tau}, s_{\tau}\right)$. The above formulation then deduces to
\begin{equation}
  \label{eq:48}
  \sum_{\tau=1}^{T} \mathbb{E}_{\mathbf{z}_{1:\tau}} \left[\log p_{\psi} \left(s'_{\tau}, r_{\tau} | s_{\tau}, a_{\tau}\right) - D_{KL}\left(q_\phi \left(a_{\tau} | s_{\tau + 1}, r_{\tau}, s_{\tau}\right) \Vert \pi \left(a_{\tau} | s_{\tau}\right)  \right) \right]
\end{equation}
which is simply a summation of a series of one-step lower bounds as derived earlier. This is helpful since we can employ the same model architecture while explore different training procedures, such as being more on-policy to capture trajectory's information.

\end{document}